%
\documentclass[runningheads]{llncs}
\usepackage{graphicx}
\usepackage{amsmath}
\usepackage{amssymb}
\usepackage{upgreek}
\usepackage{xcolor}
\usepackage{booktabs}
\usepackage{todonotes}
\usepackage{multirow} 
\usepackage{times}
\usepackage{tikz}
\usetikzlibrary{arrows,automata}
\usepackage{url}

\usepackage{algorithm}  
\usepackage{algorithmicx}
\usepackage[noend]{algpseudocode}

\algrenewcommand{\algorithmiccomment}[1]{\hfill\(\triangleright\) \textcolor[gray]{0.5}{#1}}
\newcommand{\posi}{C_{\ge 0}}
\newcommand{\nega}{C_{\le 0}}
\newcommand{\fsharp}[1]{f^\#(#1)}

\newcommand{\robcomp}[1]{y_{C_f(x)}- y_{#1} \le 0}

\newtheorem{invariant}[theorem]{Invariant}
%

\begin{document}
\title{Improving Neural Network Verification through Spurious Region Guided Refinement}
%
%
\author{Pengfei Yang\inst{1,2} \and Renjue Li\inst{1,2}\and Jianlin Li\inst{1,2}\and Cheng-Chao Huang\inst{3}\and Jingyi~Wang\inst{4}\and Jun~Sun\inst{5}\and Bai Xue\inst{1,2}\and Lijun Zhang\inst{1,2,3}}
\authorrunning{P. Yang et al.}
%
\institute{SKLCS, Institute of Software, Chinese Academy of Sciences, Beijing, China \and
University of Chinese Academy of Sciences, Beijing, China
 \and
Institute of Intelligent Software, Guangzhou, China \and
Zhejiang University, Hangzhou, China \and
Singapore Management University, Singapore
}
\maketitle              
\begin{abstract}
We propose a spurious region guided refinement approach for robustness verification of deep neural networks. Our method starts with applying the DeepPoly abstract domain to analyze the network. If the robustness property cannot be verified, the result is inconclusive. Due to the over-approximation, the computed region in the abstraction  may be \emph{spurious} in the sense that it does not contain any true counterexample. Our goal is to identify such spurious regions and use them to guide the abstraction refinement. The core idea is to make use of the obtained  constraints of the abstraction to infer new bounds for the neurons. This is achieved by linear programming techniques.  With the new bounds, we iteratively apply DeepPoly, aiming to  eliminate spurious regions. We have implemented our approach in a prototypical tool DeepSRGR. Experimental results show 
that a large amount of regions can be identified as spurious, 
and as a result, the precision of DeepPoly can be significantly improved. As a side contribution, we show that our approach can be applied to verify quantitative robustness properties.
\end{abstract}

\section{Introduction}
In recent years, deep neural networks (DNNs) have achieved exceptional performance in many applications. 
They are often applied to perform tasks which are particularly challenging for traditional logic-based software, e.g., 
nature language processing~\cite{DBLP:journals/spm/X12a}, image classification~\cite{DBLP:conf/nips/KrizhevskySH12}, and game playing~\cite{alphago}.
Unfortunately, DNNs have also been shown to be often lack of robustness and vulnerable to adversarial samples~\cite{SZSBEGF2014}, i.e., it is possible to add a small (and even imperceptible) perturbation to a correctly classified input so that it is mis-classified by a well-trained DNN. 
This raises concerns on deploying DNNs in safety-critical applications like self-driving cars~\cite{selfdriving}, medical systems~\cite{medicalsystem}, and malware detection~\cite{DBLP:journals/access/LiXCH19}. 
It is thus important that robustness of DNNs is verified before they are deployed in safety-critical domains.

In this work, we focus on (local) robustness, i.e., given an input and a manipulation region around the input (which is usually specified according to a certain norm), we verify that a given DNN never makes any mistake on any input in the region. 
The first work on DNN verification was published in~\cite{DBLP:conf/cav/PulinaT10}, 
which focuses on DNNs with sigmoid activation functions with a partition-refinement approach. 
In 2017, Katz et al.~\cite{reluplex} and Ehlers~\cite{planet} independently implemented Reluplex and Planet, two SMT solvers to verify DNNs with the $\mathrm{ReLU}$ activation function on properties expressible with SMT constraints. 
Since 2018, abstract interpretation has been one of the most popular methods for DNN verification in the lead of AI${}^2$~\cite{AI2}, and subsequent works like \cite{deepz,deeppoly,deepsymbol,charon,krelu,deeppolygpu} have improved AI${}^2$ in terms of efficiency, precision and more activation functions (like sigmoid and $\tanh$) so that abstract interpretation based approach can be applied to DNNs of larger size and more complex structures.

Among the above methods, DeepPoly~\cite{deeppoly} is a most outstanding one regarding precision and scalability.
DeepPoly is an abstract domain specially developed for DNN verification.
It sufficiently considers the structures and the operators of a DNN , and it designs a polytope expression which not only fits for these structures and operators to control the loss of precision, but also works with a very small time overhead to achieve scalability.
However, as an abstraction interpretation based method, it provides very little insight if it fails to verify the property.
In this work, we propose a method to improve DeepPoly by eliminating spurious regions through abstraction refinement.
A spurious region is a region computed using abstract semantics, conjuncted with the negation of the property to be verified. 
This region is spurious in the sense that if the property is satisfied, then this region, although not empty, does not contain any true counterexample which can be realized in the original program.
In this case, we propose a refinement strategy to rule out the spurious region, i.e., to prove that this region does not contain any true counterexamples.

Our approach is based on DeepPoly and improves it by refinement of the spurious region through linear programming. 
The core idea is to intersect the abstraction constructed by abstract interpretation with the negation of the property to generate a spurious region, and perform linear programming on the constraints of the spurious region so that the bounds of the $\mathrm{ReLU}$ neurons whose behaviors are uncertain can be tightened. 
As a result, some of these neurons can be determined to be definitely activated or deactivated, which significantly improves the precision of the abstraction given by abstract interpretation.
This procedure can be performed iteratively and the precision of the abstraction are gradually improved, so that we are likely to rule out this spurious region in some iteration.
If we successfully rule out all the possible spurious regions through such an iterative refinement, the property is soundly verified.
Our method is similar in spirit to counterexample guided abstraction refinement (CEGAR)~\cite{cegar00}, i.e., we apply abstract interpretation for abstraction and linear programming for refinement. A fundamental difference is that we use the constraints of the spurious region, instead of a concrete counterexample (which is challenging to construct in our setting), as the guidance of refinement.

The same spurious region guided refinement approach is also effective in quantitative robustness verification. 
Instead of requiring that all inputs in the region should be correctly classified, a certain probability of error in the region is allowed.
Quantitative robustness is more realistic and general compared to the ordinary robustness, and a DNN verified against quantitative robustness is useful in practice as well. 
The spurious region guided refinement approach naturally fits for this setting, since a comparatively precise over-approximation of the spurious region implies a sound robustness confidence.
To the best of our knowledge, this is the first work to verify quantitative robustness with strict soundness guarantee, which distinguishes our approach from the previous sampling based methods like \cite{statistical19,DBLP:conf/icml/WengCNSBOD19,DBLP:journals/corr/abs-2002-06864}.

In summary, our main contributions are as follows:
\begin{itemize}
    \item We propose spurious region guided refinement to verify robustness properties of deep neural networks. This approach significantly improves the precision of DeepPoly and it can verify more challenging properties than DeepPoly.
    \item We implement the algorithms as a prototype and run them on networks trained on popular datasets like MNIST and ACAS Xu. 
The experimental results show that our approach significantly improves the precision of DeepPoly in successfully verifying much stronger robustness properties  (larger maximum radius) and determining the behaviors of a great proportion of uncertain $\mathrm{ReLU}$ neurons.
    \item We apply our approach to solve quantitative robustness verification problem with strict soundness guarantee. In the experiments, we observe that, comparing to using only DeepPoly, the bounds by our approach can be up to two orders of magnitudes better in the experiments.
\end{itemize}

\noindent\emph{Organisations of the paper.} We provide preliminaries in Section~\ref{sec:pre}. DeepPoly is recalled in Section~\ref{subsec:deeppoly}. We present our overall verification framework and the algorithm in Section~\ref{sec:algorithm}, and discuss quantitative robustness verification in Section~\ref{sec:qrv}. Section~\ref{sec:eval} evaluates our algorithms through experiments. Section~\ref{sec:conclusion} reviews related work and concludes the paper.

\section{Preliminaries}\label{sec:pre}
In this section we recall some basic notions on deep neural networks, local robustness verification, and abstract interpretation. Given a vector $x \in \mathbb R^n$, we write $x_i$ to denote its $i$-th entry for $1 \le i \le n$.

\subsection{Robustness verification of deep neural networks}
In this work, we focus on deep feedforward neural networks (DNNs), which can be represented as a function $f:\mathbb{R}^m \to \mathbb{R}^n$, mapping an input $ x \in \mathbb{R}^m$ to its output $y =  f(x)   \in \mathbb{R}^n$.
A DNN $f$ often classifies an input $x$ by obtaining the maximum dimension of the output, i.e., $\arg \max_{1 \le i \le n} f(x)_i$. We denote such a DNN by $C_f:\mathbb{R}^m \to C$ which is defined by $C_f(x)=\arg \max_{1 \le i \le n} f(x)_i$ where $C=\{1,\ldots,n\}$ is the set of classification classes.

A DNN has a sequence of layers, including an input layer at the beginning, followed by several hidden layers, and an output layer in the end.
The output of a layer is the input of the next layer. Each layer contains multiple neurons, the number of which is known as the dimension of the layer. 
The DNN $f$ is the composition of the transformations between layers.
Typically an affine transformation followed by a non-linear activation function is performed.
For an affine transformation $y=Ax+b$, if the matrix $A$ is not sparse, we call such a layer fully connected.
A DNN with only fully connected layers and activation functions is a fully connected neural network (FNN).
In this work, we focus on the rectified linear unit (ReLU) activation function, defined as
$\mathrm{ReLU}(x)=\max(x,0)$
for $x \in \mathbb R$. 
Typically, a DNN verification problem is defined as follows:
\begin{definition}
\label{def:DNNverification}
Given a DNN $f:\mathbb R^m \to \mathbb R^n$, a set of inputs $X \subseteq \mathbb R^m$, and a property $P \subseteq \mathbb R^n$, we need to determine whether $f(X):=\{f(x) \mid x \in X\} \subseteq P$ holds.
\end{definition}

Local robustness describes the stability of the behaviour of a normal input under a perturbation. 
The range of input under this perturbation is the robustness region. 
For a DNN $C_f(x)$ which performs classification tasks, a robustness property  typically states that $C_f$ outputs the same class on the robustness region.

There are various ways to define a robustness region, and one of the most popular ways is to use the $L_p$ norm.
For $x \in \mathbb{R}^m$ and $1 \le p < \infty$, we define the $L_p$ norm of $x$ to be $\| x\|_p=\left(\sum_{i=1}^m |x_i|^p \right)^{\frac 1p},$
and its $L_\infty$ norm $\| x\|_\infty=\max_{1 \le i \le m} |x_i|.$
We write $\bar B_p(x,r):=\{x' \in \mathbb{R}^m \mid \|x-x'\|_p \le r\}$ to represent a (closed) $L_p$ ball for $x \in \mathbb{R}^m$ and $r>0$, which is a neighbourhood of $x$ as its robustness region.
If we set $X=\bar B_p(x,r)$ and $P=\{y \in \mathbb R^n \mid  \arg \max_i y_i=C_{f}(x)\}$ in Def.~\ref{def:DNNverification}, it is exactly the robustness verification problem. 
Hereafter, we set $p=\infty$.

\subsection{Abstract interpretation for DNN verification}

Abstract interpretation \cite{CC1977} is a static analysis method and it is aimed to find an over-approximation of the semantics of programs so as to verify their correctness. 
Generally we have a function $f:\mathbb R^m \to \mathbb R^n$ representing the concrete program, a set $X \subseteq \mathbb R^m$ representing the property that the input of the program satisfies, and a set $P \subseteq \mathbb R^n$ representing the property to verify. 
The problem is to determine whether $f(X) \subseteq P$ holds. However, if $f$ and $X$ are complex, it is difficult to calculate $f(X)$ and to determine whether $f(X) \subseteq P$ holds. 
Abstract interpretation uses abstract domains and abstract transformations to over-approximate sets and functions so that an over-approximation of the output can be obtained efficiently.

Now we have a concrete domain $\mathcal C$, which includes a set of inputs $X$ as one of its elements. To make computation efficient,
we need an abstract domain $\mathcal A$ to abstract the elements in the concrete domain. We assume that there is a partial order $\le$ on $\mathcal C$ as well as $\mathcal A$, which in our settings is the subset relation $\subseteq$.
\begin{definition}
A pair of functions $\alpha:{\mathcal C} \to {\mathcal A}$ and $\gamma:{\mathcal A} \to {\mathcal C}$ is a Galois connection, if for any $a \in {\mathcal A}$ and $c \in {\mathcal C}$, we have $\alpha(c) \le a \Leftrightarrow c \le \gamma(a)$.
\end{definition}
Intuitively, a Galois connection $(\alpha,\gamma)$ gives abstraction and concretization relations between two domains, respectively. 
Naturally $a \in \mathcal A$ is a sound abstraction of $c \in \mathcal C$ if and only if $c \le \gamma(a)$.

In abstract interpretation, it is important to choose a suitable abstract domain because it determines the efficiency and precision.
In practice, we use a certain type of constraints to represent the abstract elements in an abstract domain.
Classical abstract domains for Euclid spaces include Box, Zonotope~\cite{ghorbal2009zonotope,zonotope10}, and Polyhedra~\cite{fastpolyhedra17}.

Not only do we need abstract domains to over-approximate sets, but we are also required to adapt over-approximation to functions. Here we consider the lifting of the function $f:\mathbb R^m \to \mathbb R^n$ defined as $T_f(X):\mathcal P(\mathbb R^m) \to \mathcal P(\mathbb R^n)$, $T_f(X):=f(X)=\{f(x)\mid x \in X\}$. Now we have an abstract domain $\mathcal A_k$ for the $k$-dimension Euclid space and the corresponding concretization $\gamma$, and a function $T_f^\#:\mathcal A_m \to \mathcal A_n$ is a sound abstract transformer, if $T_f \circ \gamma \subseteq \gamma \circ T_f^\#$.

When we have a sound abstract $X^\# \in \mathcal A$ of $X$ and a sound abstract transformer $T_f^\#$, we can use the concretization of $T_f^\#(X^\#)$ to over-approximate $f(X)$ since we have $f(X)=T_f(X) \subseteq T_f(\gamma(X^\#)) \subseteq \gamma \circ T_f^\#(X^\#)$. If $\gamma \circ T_f^\#(X^\#) \subseteq P$, the property $P$ is successfully verified. Obviously, verification through abstract interpretation is sound but not complete.

AI${}^2$~\cite{AI2} first adopted abstract interpretation to verify DNNs, and many subsequent works like \cite{deepz,deeppoly,deepsymbol} focus on improving its efficiency and precision through, e.g., defining new abstract domains. 
As a deep neural network, the function $f:\mathbb R^m \to \mathbb R^n$ can be regarded as a composition $f=f_l \circ \cdots \circ f_1$ of its $l+1$ layers, where $f_j$ performs the transformation between the $j$-th and the $(j+1)$-th layer, i.e. it can be a linear transformation, or a $\mathrm{ReLU}$ operation. 

If we choose Box, Zonotope, or Polyhedra as the abstract domain, then for linear transformations and the $\mathrm{ReLU}$ function, their abstract transformers have been developed in \cite{AI2}. After we have abstract transformers $f_j^\#$ for these $f_j$, we can conduct abstract interpretation layer by layer as $f_l^\# \circ \cdots \circ f_1^\# (X^\#)$. 

\section{A Brief Introduction to DeepPoly}\label{subsec:deeppoly}

Our approach relies on the abstract domain DeepPoly~\cite{deeppoly}, which is the state-of-the-art  abstract domain for DNN verification. It defines the abstract transformers of multiple activation functions and layers used in DNNs. 
The core idea of DeepPoly is to give every variable an upper and a lower bound in the form of an affine expression using only variables that appear before it. It can express a polyhedron globally. Moreover,  experimentally, it often has better precision than  Box and Zonotope domains.

We denote the $n$-dimensional DeepPoly abstract domain with $\mathcal A_n$. 
Formally an abstract element $a \in \mathcal A_n$ is a tuple $(a^\le,a^\ge,l,u)$, where $a^\le$ and $a^\ge$ give the $i$-th variable $x_i$ a lower bound and an upper bound, respectively, in the form of a linear combination of variables which appear before it, i.e. $\sum_{k=1}^{i-1} w_kx_k + w_0$, for $i=1,\ldots,n$, and $l,u \in \mathbb R^n$ give the lower bound and upper bound of each variable, respectively. 
The concretization of $a$ is defined as
\begin{align} \label{eq:deeppolyconcrete}
\gamma(a)=\{x \in \mathbb R^n \mid a_i^\le \le x_i \le a_i^\ge, \enspace i=1,\ldots,n\}.
\end{align}
The abstract domain $\mathcal A_n$ also requests that its abstract elements $a$ should satisfy the invariant $\gamma(a) \subseteq [l,u]$. 
This invariant helps construct efficient abstract transformers.

For an affine transformation $x_i=\sum_{k=1}^{i-1} w_kx_k + w_0$, we set $a_i^\le=a_i^\ge=\sum_{k=1}^{i-1} w_kx_k + w_0$. 
By substituting the variables $x_j$ appearing in $a_i^\le$ with $a_j^\le$ or $a_j^\ge$ according to its coefficient at most $i-1$ times, we can obtain a sound lower bound in the form of linear combination on  input variables only, and $l_i$ can be computed immediately from the range of input variables. A similar procedure also works for computing $u_i$.

For a $\mathrm{ReLU}$ transformation $x_i=\mathrm{ReLU}(x_j)$, we consider two cases:
\begin{itemize}
    \item If $l_j \ge 0$ or $u_j \le 0$, this  $\mathrm{ReLU}$ neuron is definitely \emph{activated} or \emph{deactivated}, respectively. 
In this case, this $\mathrm{ReLU}$ transformation actually performs an affine transformation, and thus its abstract transformer can be defined as above.
    \item If $l_j < 0$ and $u_j > 0$, the behavior of this  $\mathrm{ReLU}$ neuron is \emph{uncertain}, and we need to over-approximate this relation with a linear upper/lower bound. 
The best upper bound is $a_i^\ge=\frac {u_j(x_j-l_j)}{u_j-l_j}$. 
For the lower bound, there are multiple choices $a_i^\le=\lambda x_j$ where $\lambda \in [0,1]$. We choose $\lambda \in \{0,1\}$ which minimizes the area of the constraints. Basically we have two abstraction modes here, corresponding to the two choices of $\lambda$.
\end{itemize}
Note that for a DNN with only $\mathrm{ReLU}$ as non-linear operators, over-approximation occurs only when there are uncertain $\mathrm{ReLU}$ neurons, which are over-approximated using a triangle. The key of improving the precision is thus to compute the bounds of the uncertain $\mathrm{ReLU}$ neurons as precisely as possible, and to determine the behaviors of the most uncertain $\mathrm{ReLU}$ neurons.

DeepPoly also supports activation functions which are monotonically increasing, convex on $(-\infty,0]$ and concave on $[0,+\infty)$, like sigmoid and $\tanh$, and it supports max pooling layers. Readers can refer to \cite{deeppoly} for details.

\section{Spurious Region Guided Refinement}
\label{sec:algorithm}

\begin{figure}[t]
\centering
\includegraphics[width=0.8\linewidth]{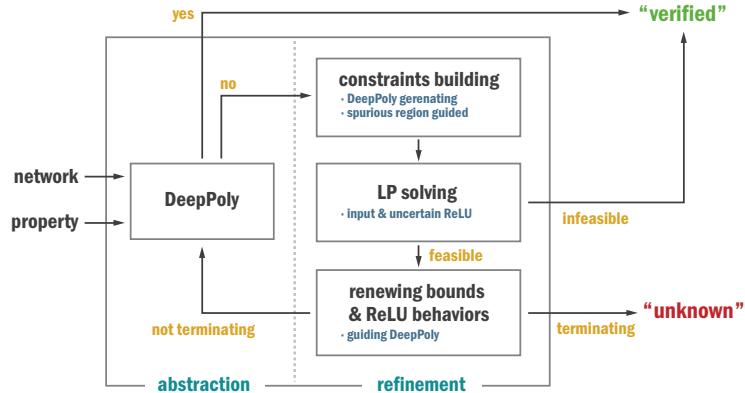} 
\caption{Framework of spurious region guided refinement} \label{fig:algflow}
\end{figure}

We explain the main steps of our algorithm, as depicted in Fig.~\ref{fig:algflow}. 
For the input property and network, we first employ DeepPoly as the initial step to compute
 $f^\#(X^\#)$. 
The concretization of $f^\#(X^\#)$ is the conjunction of many linear inequities given in Eq.~\ref{eq:deeppolyconcrete}, and for the robustness property $P$, the negation $\neg P$ is the disjunction of several linear inequities $\neg P =\bigvee_{t \ne C_f(x)} (y_{C_f(x)}-y_t \le 0)$. 
\begin{enumerate}
\item We check whether  $f^\#(X^\#) \cap^\# (y_{C_f(x)}-y_t \le 0) = \bot$ holds for each $t$. In case of yes, it indicates that the label $t$ cannot be classified, as it is dominated by $C_f(x)$. 
Otherwise,
we have $f^\#(X^\#) \cap^\# \neg P \ne \bot$, we have the conjunction $\gamma(f^\#(X^\#)) \wedge \neg P$ as a potential \emph{spurious region}, which represents the intersection of the abstraction of the real semantics and the negation of the property to verify.
We call such a region spurious because if the property is satisfied, then this region does not contain a true counterexample, i.e., a pair of input and output $(x^*,y^*)$ such that $y^*=f(x^*)$ and $y^*$ violates the property $P$.
In this case, this region is spuriously constructed due to the abstraction of the real semantics, where the counterexamples cannot be realized, and thus we aim to rule out the spurious region.

\item If no potential spurious region is found, our algorithm safely returns yes. 
\item 
Assume now that we have  a the potential spurious region.
The core idea is to use the constraints of the spurious region to refine this spurious region.
Here a natural way to refine the spurious region is linear programming, since all the constraints here are linear inequities. If the linear programming is infeasible, it indicates that the region is spurious, and thus we can return an affirmative result. Otherwise, 
our refinement will tighten the bounds of variables involved in the DNN, especially the input variables and uncertain $\mathrm{ReLU}$ neurons, and these tightened bounds help further  give a more precise abstraction.
\item As our approach is based on DeepPoly, similarly, we cannot guarantee completeness. We set a threshold $N$ of the number of iterations as a simple termination condition. If the termination condition is not reached, we run DeepPoly again, and return to the first step.
\end{enumerate}

Below we give an example, illustrating how refinement can help in robustness verification.




\begin{example}\label{example:algorithm}
Consider the network $f(x)=\mathrm{ReLU}\left(\begin{pmatrix}
1 & -1 \\
1 & 1
\end{pmatrix} x+\begin{pmatrix}
0 \\
2.5
\end{pmatrix}\right)$ and the region $\bar B_\infty((0,0)^\mathrm{T},1)$.
The robustness property $P$ here is $y_2-y_1 > 0$.
We invoke  first  DeepPoly: the lower bound  of $y_2-y_1$ given by DeepPoly is $-0.5$. 
As a result,
 the robustness property cannot be verified directly.
Fig.~\ref{fig:example}(a) shows  details of the example.
\end{example}

We fail to verify the property in Example~\ref{example:algorithm} because for the uncertain $\mathrm{ReLU}$ relation $y_1=\mathrm{ReLU}(x_3)$, the abstraction is imprecise, and the key to making the abstraction more precise here is to obtain as tight a bound as possible for $x_3$.

\begin{example}\label{example:algorithm2}
We use the constraints in Fig.~\ref{fig:example}(a) and additionally the constraint $y_2 - y_1 \le 0$ (i.e., $\neg P$) as the input of linear programming. Our aim is to obtain a tighter bound of the input neurons $x_1$ and $x_2$, as well as the uncertain $\mathrm{ReLU}$ neuron $x_3$, so the objective functions of the linear programming are $\min x_i$ and $\min -x_i$ for $i=1,2,3$.
All the three neurons have a tighter bound after the linear programming (see the red part in Fig.~\ref{fig:example}(b)). Fig.~\ref{fig:example}(b) shows the running of DeepPoly under these new bounds, where the input range and the abstraction of the uncertain  $\mathrm{ReLU}$ neuron are both refined.
Now the lower bound of $y_2-y_1$ is $0.25$, so DeepPoly successfully verifies the property.
\end{example}

\begin{figure}[t]
  \centering
  \scalebox{0.82}{
 \begin{tikzpicture}[->,>=stealth,auto,node distance=1.2cm,semithick,scale=1,every node/.style={scale=1}]
	\tikzstyle{blackdot}=[circle,fill=black,minimum size=6pt,inner sep=0pt]
	\tikzstyle{state}=[minimum size=0pt,circle,draw,thick]
	\tikzstyle{stateNframe}=[minimum size=0pt]	
	\node[state](x1){$x_1$};
	\node[state](x2)[below of=x1,yshift=-0.3cm]{$x_2$};
	\node[state](x3)[right of=x1,xshift=1.2cm]{$x_3$};
	\node[state](x4)[right of=x2,xshift=1.2cm]{$x_4$};
	\node[state](y1)[right of=x3,xshift=1.2cm]{$y_1$};
	\node[state](y2)[right of=x4,xshift=1.2cm]{$y_2$};
\node[stateNframe](f1)[above of=x1,yshift=-0.5cm]{$u_1=1$};	
\node[stateNframe](f2)[above of=f1,yshift=-0.85cm]{$l_1=-1$};	
\node[stateNframe](f3)[above of=f2,yshift=-0.85cm]{$x_1\le 1$};	
\node[stateNframe](f4)[above of=f3,yshift=-0.85cm]{$x_1 \ge -1$};

\node[stateNframe](g1)[above of=x3,yshift=-0.5cm]{$u_3=2$};	
\node[stateNframe](g2)[above of=g1,yshift=-0.85cm]{$l_3=-2$};	
\node[stateNframe](g3)[above of=g2,yshift=-0.85cm]{$x_3\le x_1-x_2$};	
\node[stateNframe](g4)[above of=g3,yshift=-0.85cm]{$x_3 \ge x_1-x_2$};

\node[stateNframe](h1)[above of=y1,yshift=-0.5cm]{$u_5=2$};	
\node[stateNframe](h2)[above of=h1,yshift=-0.85cm]{$l_5=0$};	
\node[stateNframe](h3)[above of=h2,yshift=-0.85cm]{$y_1\le 0.5 x_3+1$};	
\node[stateNframe](h4)[above of=h3,yshift=-0.85cm]{$y_1\ge 0$};

\node[stateNframe](i1)[below of=x2,yshift=0.5cm]{$x_2\ge -1$};	
\node[stateNframe](i2)[below of=i1,yshift=0.85cm]{$x_2 \le 1$};	
\node[stateNframe](i3)[below of=i2,yshift=0.85cm]{$l_2= -1$};	
\node[stateNframe](i4)[below of=i3,yshift=0.85cm]{$u_2 = 1$};

\node[stateNframe](j1)[below of=x4,yshift=0.5cm]{$x_4\ge x_1+x_2+2.5$};	
\node[stateNframe](j2)[below of=j1,yshift=0.85cm]{$x_4 \le x_1+x_2+2.5$};
\node[stateNframe](j3)[below of=j2,yshift=0.85cm]{$l_4= 0.5$};	
\node[stateNframe](j4)[below of=j3,yshift=0.85cm]{$u_4 = 4.5$};

\node[stateNframe](k1)[below of=y2,yshift=0.5cm]{$y_2\ge x_4$};	
\node[stateNframe](k2)[below of=k1,yshift=0.85cm]{$y_2 \le x_4$};
\node[stateNframe](k3)[below of=k2,yshift=0.85cm]{$l_6= 0.5$};	
\node[stateNframe](k4)[below of=k3,yshift=0.85cm]{$u_6 = 4.5$};

\node[stateNframe](a)[below of=j4,yshift=0.5cm]{(a)};

\node[state](x11)[right of=y1,xshift=1cm]{$x_1$};
	\node[state](x21)[below of=x11,yshift=-0.3cm]{$x_2$};
	\node[state](x31)[right of=x11,xshift=1.2cm]{$x_3$};
	\node[state](x41)[right of=x21,xshift=1.2cm]{$x_4$};
	\node[state](y11)[right of=x31,xshift=1.2cm]{$y_1$};
	\node[state](y21)[right of=x41,xshift=1.2cm]{$y_2$};
	
\node[stateNframe](l1)[above of=x11,yshift=-0.5cm]{\color{red}$u_1=0$};	
\node[stateNframe](l2)[above of=l1,yshift=-0.85cm]{$l_1=-1$};	
\node[stateNframe](l3)[above of=l2,yshift=-0.85cm]{\color{red}$x_1\le 0$};	
\node[stateNframe](l4)[above of=l3,yshift=-0.85cm]{$x_1 \ge -1$};

\node[stateNframe](m1)[below of=x21,yshift=0.5cm]{$x_2\ge -1$};	
\node[stateNframe](m2)[below of=m1,yshift=0.85cm]{\color{red}$x_2 \le -0.667$};	
\node[stateNframe](m3)[below of=m2,yshift=0.85cm]{$l_2= -1$};	
\node[stateNframe](m4)[below of=m3,yshift=0.85cm]{\color{red}$u_2 = -0.667$};

\node[stateNframe](n1)[above of=x31,yshift=-0.5cm]{\color{red}$u_3=1$};	
\node[stateNframe](n2)[above of=n1,yshift=-0.85cm]{\color{red}$l_3=-0.333$};	
\node[stateNframe](n3)[above of=n2,yshift=-0.85cm]{$x_3\le x_1-x_2$};	
\node[stateNframe](n4)[above of=n3,yshift=-0.85cm]{$x_3 \ge x_1-x_2$};

\node[stateNframe](o1)[below of=x41,yshift=0.5cm]{$x_4\ge x_1+x_2+2.5$};	
\node[stateNframe](o2)[below of=o1,yshift=0.85cm]{$x_4 \le x_1+x_2+2.5$};
\node[stateNframe](o3)[below of=o2,yshift=0.85cm]{$l_4= 0.5$};	
\node[stateNframe](o4)[below of=o3,yshift=0.85cm]{\color{blue}$u_4 = 1.833$};

\node[stateNframe](q1)[above of=y11,yshift=-0.5cm]{\color{blue}$u_5=1$};	
\node[stateNframe](q2)[above of=q1,yshift=-0.85cm]{$l_5=0$};	
\node[stateNframe](q3)[above of=q2,yshift=-0.85cm]{\enspace\enspace\color{red}$y_1\le 0.75 x_3+0.25$};	
\node[stateNframe](q4)[above of=q3,yshift=-0.85cm]{\color{red}$y_1\ge x_3$};

\node[stateNframe](p1)[below of=y21,yshift=0.5cm]{$y_2\ge x_4$};	
\node[stateNframe](p2)[below of=p1,yshift=0.85cm]{$y_2 \le x_4$};
\node[stateNframe](p3)[below of=p2,yshift=0.85cm]{$l_6= 0.5$};	
\node[stateNframe](p4)[below of=p3,yshift=0.85cm]{\color{blue}$u_6 = 1.833$};

\node[stateNframe](b)[below of=o4,yshift=0.5cm]{(b)};

	\path (x1) edge	[-]						node {$1$} (x3)
		(x1) edge[-]							node[xshift=-0.6cm,yshift=0.2cm] {$1$} (x4)
		(x2) edge	[-]						node[xshift=-0.2cm,yshift=-0.35cm] {$-1$} (x3)
		(x2) edge[-]							node {$1$} (x4)
			(x3) edge	[-]						node { $\mathrm{ReLU}(x_3)$} (y1)
		(x4) edge[-]							node {$\mathrm{ReLU}(x_4)$} (y2)
		(x11) edge	[-]						node {$1$} (x31)
		(x11) edge[-]							node[xshift=-0.6cm,yshift=0.2cm] {$1$} (x41)
		(x21) edge	[-]						node[xshift=-0.2cm,yshift=-0.35cm] {$-1$} (x31)
		(x21) edge[-]							node {$1$} (x41)
			(x31) edge	[-]						node { $\mathrm{ReLU}(x_3)$} (y11)
		(x41) edge[-]							node {$\mathrm{ReLU}(x_4)$} (y21)
		
			  ;
\end{tikzpicture}}
  \caption{\label{fig:example}
Example~\ref{example:algorithm} (left) and Example~\ref{example:algorithm2} (right): where the red parts are introduced through linear programming based refinement and the blue parts are introduced by a second run of DeepPoly.}
\end{figure}
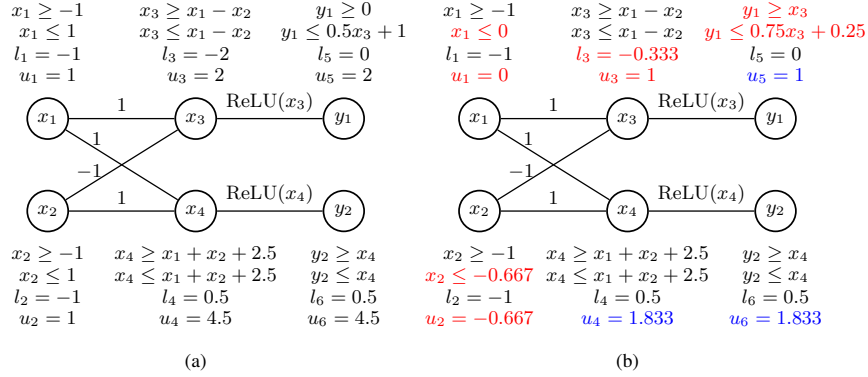


\subsection{Main algorithm}

\begin{algorithm}[t]
    \caption{Spurious region guided robustness verification}
    \label{Alg:main}
    \begin{algorithmic}[1]
        \Require
        \Statex  DNN $f$, input $x$, radius $r$.
        \Ensure
        \Statex Return ``YES'' if verified, or ``UNKNOWN'' otherwise.
        \Function{Verify}{$f$, $x$, $r$}
            \State $Y_0 \gets \fsharp{\bar B_\infty(x,r)}$ \label{line:initialai} \Comment{abstract interpretation with DeepPoly}
            \State $V_\mathrm{u} \gets \{v \mid v \text{ was marked as uncertain in Line~\ref{line:initialai}}\}$
            \State $A = \{ t \mid Y_0 \cap^\# ( \robcomp{t} ) \ne \bot \}$ \label{line:findspu}
            \If{$A = \emptyset$} \Return YES \Comment{otherwise $A = \{t_1, \ldots, t_l \}$} \EndIf
            \For{$i \gets 1$ to $l$}
                \State Verified $\gets $ False, $V \gets V_\mathrm{u}$, $Y \gets Y_0$ \Comment{denote $Y=(Y^\le,Y^\ge,l,u)$}
                \State $C_{\ge 0} \gets \emptyset$, $C_{\le 0} \gets \emptyset$ \Comment{set of new activated/deactivated neurons}
                \State $\mathrm{Spu} \gets ( \robcomp{t_i} ) \wedge \bigwedge_{j=1}^{i-1} ( y_{C_f(x)}-y_{t_j} \ge 0) $ \Comment{spurious region}
                \While{terminating condition not satisfied} \label{line:terminating}
                    \If{$Y \wedge \mathrm{Spu}$ is infeasible} 
                        \State Verified $\gets $ True
                        \State \textbf{break} 
                    \EndIf
                    \For{$v \in V \cup V_0$}
                        \Comment{$V_0$: set of input neurons}
                        \State $(l_v,u_v) \gets \Call{LP}{Y \wedge \mathrm{Spu}, v}$
                    \EndFor
                    \For{$v \in V$}
                        \If{$l_v \ge 0$} 
                            \State $\posi \gets \posi \cup \{v\}$, $V \gets V \setminus \{v\}$
                        \ElsIf{$u_v \le 0$}
                            \State $\nega \gets \nega \cup \{v\}$,
                            $V \gets V \setminus \{v\}$
                        \EndIf
                    \EndFor
                    \State $X \gets
                            \bigcap_{v \in V_0} 
                            \{ l_v \le v \le u_v\}$
                    \State $Y \gets \fsharp{X}$ according to $\posi$, $\nega$, $l$, and $u$ 
                   \State $V \gets \{v \mid v \text{ was marked as uncertain in Line~22}\} \setminus (C_{\ge 0} \cup C_{\le 0})$
                    \If{$Y\cap^\# ( \robcomp{t_i} ) = \bot$}
                    \State Verified $\gets $ True
                        \State \textbf{break} 
                    \EndIf
                   \EndWhile
                    \If {Verified $=$ False} \Return UNKNOWN \EndIf
            \EndFor
            \State \Return YES
        \EndFunction
\end{algorithmic}\end{algorithm}

Alg.~\ref{Alg:main} presents our algorithm. 
First we run abstract interpretation to find the uncertain neurons and the spurious regions (Line~{2--5}). 
For each possible spurious region, we have a \textbf{while} loop which iteratively refines the abstraction. In each iteration we perform linear programming to renew the bounds of the input neurons and uncertain $\mathrm{ReLU}$ neurons; when we find that the bound of an uncertain $\mathrm{ReLU}$ neuron becomes definitely non-negative or non-positive, then the $\mathrm{ReLU}$ behavior of this neuron is renewed (Line~{14--20}). We use them to guide abstract interpretation in the next step (Line~{21--22}). 
Here in Line~22, we make sure that during the abstract interpretation, the abstraction of previous uncertain neurons (namely the uncertain neurons before the linear programming step in the same iteration) compulsorily follows the new bounds and new $\mathrm{ReLU}$ behaviors given by the current $C_{\ge 0}$, $C_{\le 0}$, $l$, and $u$, where these bounds will not be renewed by abstract interpretation, and the concretization of $Y$ is defined as
\begin{align}\label{eq:concrete2}
    \gamma(Y)=\{ x \mid \forall i. \enspace Y_i^\le \le x_i \le Y_i^\ge\} \cap [l,u].
\end{align}
The \textbf{while} loop ends when (i) either we find that the spurious region is infeasible (Line~{11, 24}) and we proceed to refine the next spurious region, with a label Verified True, (ii) or we reach the terminating condition and fail to rule out this spurious region, in which case we return UNKNOWN. 
If every \textbf{while} loop ends with the label Verified True, we successfully rule out all the spurious regions and return YES. 
An observation is that, if some spurious regions have been ruled out, we can add the constraints of their negation to make the current spurious region smaller so as to improve the precision (Line~9).

Here we discuss the soundness of Alg.~\ref{Alg:main}. We focus on the \textbf{while} loop and claim that it has the following loop invariant: 
\begin{invariant}\label{inv:soundness}
The abstract element $Y$ over-approximates the intersection of the semantics of $f$ on $\bar B_\infty(x,r)$ and the spurious region, i.e., $f(\bar B_\infty(x,r)) \cap \mathrm{Spu} \subseteq \gamma(Y)$.
\end{invariant}
The initialization of $Y$ is $f^\#(\bar B_\infty(x,r))$ and it is naturally an over-approximation. The box $X$ is obtained by linear programming on $Y \wedge \mathrm{Spu}$, and $f^\#(X)$ is calculated through abstract interpretation and the bounds given by linear programming on $Y \wedge \mathrm{Spu}$, and thus it remains an over-approximation. It is worth mentioning that, when we run DeepPoly in Line~22, we are using the bounds obtained by linear programming to guide DeepPoly, and this may violate the invariant $\gamma(a) \subseteq [l,u]$ mentioned in Sect.~\ref{subsec:deeppoly}. Nonotheless, soundness still holds since the concretization of $Y$ is newly defined in Eq.~\ref{eq:concrete2}, where both items in the intersection over-approximate $f(\bar B_\infty(x,r)) \cap \mathrm{Spu}$. With Invarient~\ref{inv:soundness}, Alg.~\ref{Alg:main} returns YES if for any possible spurious region $\mathrm{Spu}$, the over-approximation of $f(\bar B_\infty(x,r)) \cap \mathrm{Spu}$ is infeasible, which implies the soundness of Alg.~\ref{Alg:main}.

\subsection{Iterative refinement of the spurious region}
\label{subsec:theory}
Here we present more theoretical insight on the iterative refinement of the spurious region. 
An iteration of the \textbf{while} loop in Alg.~\ref{Alg:main} can be represented as a function $\mathcal L:\mathcal A \to \mathcal A$, where $\mathcal A$ is the DeepPoly domain. 
An interesting observation is that, the abstract transformer $f^\#$ in the DeepPoly domain is not necessarily increasing, because different input ranges, even if they have inclusion relation, may lead to different choices of the abstraction mode of some uncertain $\mathrm{ReLU}$ neurons, which may violate the inclusion relation of abstraction. We have found such examples during our experiment, which is illustrated in the following example.
\begin{example}
Let $f(x)=\mathrm{ReLU}(x)$ with input ranges $I_1=[-2,1]$ and $I_2=[-2,3]$. We have $f^\#(I_1)=\{(x_1,x_2)^\mathrm{T} \in \mathbb R^2 \mid -2 \le x_1 \le 1,\enspace x_2 \ge 0, \enspace x_2 \le \frac 13 x_1+\frac 23\}$ and $f^\#(I_2)=\{(x_1,x_2)^\mathrm{T} \in \mathbb R^2 \mid -2 \le x_1 \le 3,\enspace x_2 \ge x_1,\enspace x_2 \le \frac 35 x_1+\frac 65\}$. We observe $(1,0)^\mathrm{T} \in f^\#(I_1)$ but $(1,0)^\mathrm{T} \notin f^\#(I_2)$, which implies that the transformer $f^\#$ is not increasing.
\end{example}
This fact also implies that $\mathcal{L}$ is not necessarily increasing, which violates the condition of Kleene's Theorem on fixed point \cite{kleene}.

Now we turn to the analysis of the sequence $\{Y_k=\mathcal L ^k (f^\#(\bar B_\infty(x,r)))\}_{k=1}^\infty$, where $\mathcal L^1:=\mathcal L$ and $\mathcal L^k:=\mathcal L \circ \mathcal L^{k-1}$ for $k \ge 2$. First we have the following lemma showing that in our settings every decreasing chain $S$ in the DeepPoly domain $\mathcal A$ has a meet $\bigcap^\# S \in \mathcal A$.

\begin{lemma} \label{lemma:CPO}
Let $\mathcal A_n$ be the $n$-dimensional DeepPoly domain and $\{a^{(k)}\} \subseteq \mathcal A_n$ a decreasing bounded sequence of non-empty abstract elements. If the coefficients in $a _i^{(k),\le}$ and $a_i^{(k),\ge}$ are uniformly bounded, then there exists an abstract element $a^* \in \mathcal A_n$ s.t. $\gamma(a^*)=\bigcap_{k=1}^\infty \gamma(a^{(k)})$.
\end{lemma}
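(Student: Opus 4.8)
The plan is to exhibit $K:=\bigcap_{k=1}^{\infty}\gamma(a^{(k)})$ as the concretization of a single DeepPoly element obtained as a ``limit'' of the $a^{(k)}$. First note that $K$ is nonempty, compact and convex: every $\gamma(a^{(k)})$ is a closed polytope contained in the bounded set $\gamma(a^{(1)})$ (bounded by the invariant $\gamma(a^{(1)})\subseteq[l^{(1)},u^{(1)}]$), the family is decreasing, so the finite intersection property gives $K\neq\emptyset$, and $K$ is closed and convex as an intersection of such sets. An element of $\mathcal A_{n}$ is determined by finitely many reals --- the coefficients of all $a^{\le}_{i},a^{\ge}_{i}$ together with $l$ and $u$ --- so compactness of this parameter space is what will make a limiting construction possible.

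One cannot simply take coefficient-wise limits of the $a^{(k)}$: the affine bounds $a^{(k),\le}_{i},a^{(k),\ge}_{i}$ of a given element may be looser than what $\gamma(a^{(k)})$ actually forces on $x_{i}$ (constraints on later coordinates can tighten it), and then the plain limit $a^{*}$ can have $\gamma(a^{*})\supsetneq K$. So I would first rewrite each $a^{(k)}$ into an equivalent element (same concretization) whose bound data on coordinate $i$ has been tightened to what $\gamma(a^{(k)})$ actually enforces, processing coordinates in increasing order; in particular the box entries $l^{(k)}_{i},u^{(k)}_{i}$ are replaced by the exact extrema of $x_{i}$ over $\gamma(a^{(k)})$, which is legitimate since shrinking the box preserves the invariant. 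Now the coefficients are uniformly bounded by hypothesis and the tight box entries lie in $[l^{(1)},u^{(1)}]$, so Bolzano--Weierstrass yields a subsequence along which all parameters converge; since $(\gamma(a^{(k)}))_{k}$ is decreasing, the intersection over the subsequence is still $K$, and after relabelling the whole sequence converges. Let $a^{*}$ collect the limits; its box entries are finite, so $\gamma(a^{*})\subseteq[l^{*},u^{*}]$ holds and $a^{*}\in\mathcal A_{n}$.

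It remains to prove $\gamma(a^{*})=K$. The inclusion $K\subseteq\gamma(a^{*})$ is immediate: for $x\in K$ one has $a^{(k),\le}_{i}(x)\le x_{i}\le a^{(k),\ge}_{i}(x)$ and $l^{(k)}\le x\le u^{(k)}$ for every $k$, and letting $k\to\infty$ the affine expressions converge pointwise (coefficients converge, $x$ fixed) to those of $a^{*}$. For the converse I would show $\gamma(a^{*})\subseteq\gamma(a^{(k)})$ for each fixed $k$. Since the affine expressions of $a^{(k)}$ converge to those of $a^{*}$ uniformly on every bounded set, a point $x$ at which every constraint of $a^{*}$ that is not an implicit equality of $\gamma(a^{*})$ holds strictly lies, for all large $k'$, in $\gamma(a^{(k')})$, hence in $\gamma(a^{(k)})$ because $\gamma(a^{(k')})\subseteq\gamma(a^{(k)})$ once $k'\ge k$; the tightening carried out above is precisely what guarantees that at such $x$ the implicit equalities of $a^{*}$ are also respected by $a^{(k')}$. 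Such $x$ form the relative interior of $\gamma(a^{*})$, which is dense in the closed set $\gamma(a^{*})$, and $\gamma(a^{(k)})$ is closed, so $\gamma(a^{*})\subseteq\gamma(a^{(k)})$, completing the argument.

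The step I expect to be the main obstacle is making the tightening --- and hence the treatment of the implicit equalities --- rigorous, since the exact bound that a \emph{single} element $\gamma(a^{(k)})$ enforces on $x_{i}$ given $x_{<i}$ need not be affine (eliminating the later coordinates produces a maximum/minimum of affine functions). I would handle this by an induction on the dimension $n$, the base $n=1$ being a one-line consequence of monotone convergence of nested intervals, and in the inductive step arguing that, for the \emph{decreasing limit}, these maxima/minima restricted to the relevant feasible region collapse to single affine functions, so that a legal DeepPoly element is produced at each stage; isolating exactly where the decreasing hypothesis forces this collapse is the crux of the proof.
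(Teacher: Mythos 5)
Your proposal shares the paper's skeleton (induction on the dimension, Bolzano--Weierstrass on the uniformly bounded coefficients, a two-inclusion argument for the limit element, with $K\subseteq\gamma(a^*)$ being the easy pointwise-limit direction), but it is not a complete proof, and the missing piece is exactly the step you yourself flag as the crux. The tightening you want to perform does not stay inside the DeepPoly domain: the exact bound that $\gamma(a^{(k)})$ enforces on $x_i$ as a function of $x_1,\dots,x_{i-1}$ is a maximum (for the lower bound) or minimum (for the upper bound) of affine functions, and your assertion that ``for the decreasing limit these maxima/minima collapse to single affine functions'' is never argued. Nothing in the hypothesis supports it for a fixed $k$: each $a^{(k)}$ is an arbitrary abstract element, only the concretizations are nested, so the tight projection bounds of an individual $\gamma(a^{(k)})$ are genuinely piecewise affine in general, and no amount of ``decreasingness'' of the sequence changes the shape of a single member. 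Moreover, your use of the tightening to dispose of the implicit equalities in the inclusion $\gamma(a^*)\subseteq\gamma(a^{(k)})$ is circular: to know that the tightened bound $a^{(k'),\le}_i$ evaluated at $x_{<i}$ lies below $x_i$ you would essentially need $x\in\gamma(a^{(k')})$, which is the containment being proved. So the proposal records a plan plus an open lemma, not a proof.

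The paper's proof avoids the tightening altogether. It argues by induction on $n$: the prefix elements $a^{(k)}_{1..n-1}$ form a decreasing sequence whose meet $a^*_{1..n-1}$ exists by the induction hypothesis; Bolzano--Weierstrass applied to the coefficients of $a^{(k),\le}_n$ and $a^{(k),\ge}_n$ yields limit affine bounds $a^{*,\le}_n$, $a^{*,\ge}_n$ along a subsequence; and the decisive fact is that along the decreasing sequence the lower bounds converge \emph{increasingly} and the upper bounds \emph{decreasingly} on $\gamma(a^*_{1..n-1})$, so the limit lower bound dominates every $a^{(k),\le}_n$ pointwise there (and dually for the upper bounds). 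That monotonicity gives $\gamma(a^*)\subseteq\gamma(a^{(k)})$ for every $k$ directly, with no relative-interior, density, or implicit-equality analysis, and the reverse inclusion is the pointwise limit you already have; passing from the subsequence to the full sequence uses that the intersection along a subsequence of a decreasing chain equals the full intersection. In other words, the concern that motivated your tightening (that the recorded affine bounds may be looser than what the concretization enforces) is neutralized in the paper not by recomputing exact bounds but by exploiting monotonicity of the recorded bounds along the decreasing chain. If you want to repair your write-up, drop the tightening and prove or assume that monotonicity; as it stands, the collapse claim it is replaced by is both unproven and implausible.
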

\textbf{Remark:} The condition that the coefficients in $a _i^{(k),\le}$ and $a_i^{(k),\ge}$ are uniformly bounded are naturally satisfied in our setting, since in a DNN the coefficients and bounds involved have only finitely many values. Readers can refer to Appendix for a formal proof.

Lemma~\ref{lemma:CPO} implies that if our sequence $\{Y_k\}$ is decreasing, then the iterative refinement converges to an abstract element in DeepPoly, which is the greatest fixed point of $\mathcal L$ that is smaller than $f^\#(\bar B_\infty(x,r))$. A sufficient condition for $\{Y_k\}$ being decreasing is that during the abstract interpretation in every $Y_k$, every initial uncertain neuron maintains its abstraction mode, i.e. its corresponding $\lambda$ does not change, before its $\mathrm{ReLU}$ behavior is determined. A weaker sufficient condition for convergence is that change in abstraction mode of uncertain neurons never happens after finitely many iterations. 

If the abstraction mode of uncertain neurons changes infinitely often, generally the sequence $\{Y_k\}$ does not converge. In this case, we can consider its subsequence in which every $Y_k$ is obtained with the same abstraction mode. It is easy to see that such a subsequence must be decreasing and thus have a meet, as it is an accumulative point of the sequence $\{Y_k\}$. Since there are only finitely many choices of abstraction modes, such a accumulative points exists in $\{Y_k\}$, and there are only  finitely many accumulative points. We conclude these results in the following theorem which describes the convergence behavior of our iterative refinement of the spurious region:

\begin{theorem}
There exists a subsequence $\{Y_{n_k}\}$ of $\{Y_k\}$ s.t. $\{Y_{n_k}\}$ is decreasing and thus has a meet $\bigcap^\# \{Y_{n_k}\}$. Moreover, the set 
$$
\left\{\bigcap{}^\# \{Y_{n_k} \} \mid \{Y_{n_k}\} \text{ is a decreasing subsequence of } \{Y_k\}\right\}
$$
is finite, and it is a singleton if exact one abstraction mode of uncertain $\mathrm{ReLU}$ neurons happens infinitely often. 
\end{theorem}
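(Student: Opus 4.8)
The plan is to combine three ingredients: the finiteness of the set of abstraction modes, the monotonicity of DeepPoly once a mode is fixed, and Lemma~\ref{lemma:CPO}. I would first set up some bookkeeping. A single invocation of DeepPoly inside one iteration of $\mathcal{L}$ is determined — apart from the input box and the compulsory bounds it is fed — by a finite datum: for each neuron of $V_\mathrm{u}$, whether it has already been moved into $C_{\ge 0}\cup C_{\le 0}$ and, if it is still uncertain, which $\lambda\in\{0,1\}$ is used. I call this the \emph{abstraction mode} $m_k\in M$ of iteration $k$, and since $V_\mathrm{u}$ is finite the set $M$ is finite. Next I would record two monotonicity facts that hold along $\{Y_k\}$ irrespective of modes: the input boxes are nested, $X_{k+1}\subseteq X_k$, because DeepPoly leaves the input neurons with exactly the bounds of the box it was run on, and the linear program of the next iteration is solved over $\gamma(Y_k)\wedge\mathrm{Spu}$, which can only tighten them; and, for the same reason, the compulsory bounds attached to the previously-uncertain neurons are tightened from one iteration to the next. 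I would also note that every $Y_k$ is non-empty (otherwise the \textbf{while} loop would already have exited with the flag Verified set to True) and that, since $X_k\subseteq\bar B_\infty(x,r)$, the $Y_k$ are uniformly bounded with uniformly bounded coefficients, so Lemma~\ref{lemma:CPO} applies to any decreasing chain extracted from them.

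The heart of the argument is the claim that \emph{if two iterations $p<q$ use the same mode $m$, then $\gamma(Y_q)\subseteq\gamma(Y_p)$}. Both $Y_p$ and $Y_q$ result from running DeepPoly with the mode $m$, on the nested boxes $X_q\subseteq X_p$ and with the tighter compulsory bounds; fixing $m$ removes every non-monotone choice of DeepPoly, so a smaller input box together with tighter compulsory bounds yields, layer by layer, tighter over-approximating ReLU triangles, hence $\gamma(Y_q)\subseteq\gamma(Y_p)$. This is exactly the phenomenon behind the sufficient condition discussed above, and I expect it to be the only non-routine point of the proof: making it precise requires an induction over the layers showing that DeepPoly's greedy back-substitution preserves inclusion of the generated polyhedra when the modes are held fixed, and it is also what renders the non-monotonicity example above harmless here (there the two runs use different $\lambda$). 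Granting the claim, for every mode $m$ that occurs at infinitely many iterations the subsequence $\{Y_k : m_k=m\}$ is a decreasing subsequence of $\{Y_k\}$, and by Lemma~\ref{lemma:CPO} it has a meet in $\mathcal{A}$; since $\{Y_k\}$ is infinite and $M$ finite, at least one mode does occur infinitely often, which proves the first assertion of the theorem.

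Finally, for the finiteness of the set of meets I would use the elementary fact that an (infinite) subsequence of a decreasing chain has the same meet as the chain itself. Given an arbitrary decreasing subsequence $\{Y_{n_k}\}$ of $\{Y_k\}$, pigeonhole on modes yields a mode $m$ used by infinitely many of its terms; those terms form an infinite subsequence both of the decreasing chain $\{Y_{n_k}\}$ and of the decreasing chain $\{Y_k : m_k=m\}$, so $\bigcap^\#\{Y_{n_k}\}=\bigcap^\#\{Y_k : m_k=m\}$. Hence every meet coming from a decreasing subsequence is one of the at most $|M|$ values $\bigcap^\#\{Y_k : m_k=m\}$ with $m$ occurring infinitely often in $\{Y_k\}$, so the set is finite; and if exactly one mode $m^*$ occurs infinitely often then $m=m^*$ is forced in this argument, so the set collapses to the single value $\bigcap^\#\{Y_k : m_k=m^*\}$, which is actually attained since that subsequence is itself a decreasing subsequence of $\{Y_k\}$. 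To reiterate, the main obstacle is the fixed-mode monotonicity of DeepPoly; once it is in hand, the remainder is combinatorics on the finitely many modes together with Lemma~\ref{lemma:CPO}.
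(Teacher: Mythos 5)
Your proposal is correct and follows essentially the same route as the paper: pigeonhole on the finitely many abstraction modes to extract a constant-mode (hence decreasing) subsequence whose meet exists by Lemma~\ref{lemma:CPO}, then observe that any decreasing subsequence shares its meet with a constant-mode sub-subsequence, giving at most finitely many accumulation values and a singleton when exactly one mode recurs infinitely often. The only difference is one of emphasis: the paper treats the decreasingness of a fixed-mode subsequence as obvious, whereas you correctly flag it as the one step needing a layer-wise monotonicity argument for DeepPoly under a fixed mode and sketch why it holds.
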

\begin{proof}
Since the abstraction modes of uncertain $\mathrm{ReLU}$ neurons have only finitely many choices, there must be one which happens infinitely often in the computation of the sequence $\{Y_k\}$, and we choose the subsequence $\{Y_{n_k}\}$ in which every item is computed through this abstraction mode. Obviously $\{Y_{n_k}\}$ is decreasing and thus has a meet.

For a decreasing subsequence $\{Y_{n_k}\}$, we can find its subsequnce in which the abstraction mode of uncertain $\mathrm{ReLU}$ neurons does not change, and they have the same meet. Since there are only finitely many choices of abstraction modes of uncertain $\mathrm{ReLU}$ neurons, such accumulative points of $\{Y_k\}$ also have finitely many values.
If exact one abstraction mode of uncertain $\mathrm{ReLU}$ neurons happens infinitely often, obviously there is only one accumulative point in $\{Y_k\}$.\hfill \qed
\end{proof}

\subsection{Optimizations} \label{subsec:optimizations}
In the implementation of our main algorithm, we propose the following optimizations to improve the precision of refinement.

\paragraph{Optimization 1: More precise constraints in linear programming.} 
In Line~15 of Alg.~\ref{Alg:main}, it is not the best choice to take the linear constraints in the abstract element $Y$ into linear programming, because the abstraction of uncertain $\mathrm{ReLU}$ neurons in DeepPoly is not the best. Planet~\cite{planet} has a component which gives a more precise linear approximation for uncertain $\mathrm{ReLU}$ relations, where it uses the linear constraints
$y \le \frac {u(x-l)}{u-l},\enspace y \ge x,\enspace y \ge 0$
to over-approximate the relation $y=\mathrm{ReLU}(x)$ with $x \in [l,u]$.

\paragraph{Optimization 2: A better choice of the spurious region.}
If a true counterexample exists, there must exist an input $x' \in \bar B_\infty(x,r)$ s.t. $C_f(x),t \in \arg \max_i f(x')_i$ with some $t \ne C_f(x)$ since $f$ is continuous and $\bar B_\infty(x,r)$ is convex. That is to say, $y_{C_f(x)}=y_t$ is a necessary condition for the existence of a true counterexample, and we can choose $(y_{C_f(x)}-y_{t_i}=0)\wedge \bigwedge_{j=1}^{i-1} ( y_{C_f(x)}-y_{t_j} \ge 0)$ as the spurious region in Line~9 of Alg.~\ref{Alg:main}. This optimization makes the spurious region even smaller and benefits the precision improvement.

\paragraph{Optimization 3: Priority to work on small spurious regions.}
In Line~6 of Alg.~\ref{Alg:main},we determine the order of refining the spurious regions based on their sizes, i.e., a smaller region is chosen earlier.  This is based on the intuition that Alg.~\ref{Alg:main} works effectively if the spurious region is small. After the small spurious regions are ruled out, the constraints of large spurious regions can be tightened with the conjunction $\bigwedge_{j=1}^{i-1}(y_{C_f(x)}-y_{t_j} \ge 0)$. It is difficult to strictly determine which spurious region is the smallest, and thus we refer to the lower bound of $y_{C_f(x)}-y_{t_i}$ given by DeepPoly, i.e., the larger this lower bound is, the smaller the spurious region is likely to be, and we perform the \textbf{for} loop in Line~6 of Alg.~\ref{Alg:main} in this order. It is worth mentioning that, this optimization still makes sense even if we already adopt Optimization 2, since intuitively a larger spurious region $(y_{C_f(x)}-y_{t_i}\le 0)\wedge \bigwedge_{j=1}^{i-1} ( y_{C_f(x)}-y_{t_j} \ge 0)$ is more likely to have a larger boundary $(y_{C_f(x)}-y_{t_i}=0)\wedge \bigwedge_{j=1}^{i-1} ( y_{C_f(x)}-y_{t_j} \ge 0)$.


\section{Quantitative Robustness Verification}\label{sec:qrv}
In this section we recall the notion of quantitative robustness and show how to verify a quantitative robustness property of a DNN with spurious region guided refinement.

In practice, we may not need a strict condition of robustness to ensure that an input $x$ is not an adversarial example. A notion of mutation testing is proposed in \cite{detection1,detection2}, which requires that an input $x$ is normal if it has a low \emph{label change rate} on its neighbourhood. 
They follow a statistical way to estimate the label change rate of an input, which motivates us to give a formal definition of the property showing a low label change rate, and to consider the verification problem for such a property. Below we recall 
 the definition of   \emph{quantitative robustness} \cite{DBLP:journals/corr/abs-1902-05983},
 where we have a parameter $0< \eta \le 1$ representing the confidence of robustness.

\begin{definition}\label{def:approrobustness}
Given a DNN $C_f:\mathbb{R}^m \to C$, an input $x \in \mathbb R ^m$,  $r>0$, $0< \eta \le 1$, and a probability measure $\mu$ on $\bar B_\infty(x,r)$, $f$ is $\eta$-robust at $x$, if 
\begin{align*}
\mu(\{x' \in \bar  B_\infty(x,r) \mid C_f(x')=C_f(x)\}) \ge \eta.
\end{align*}
\end{definition}
Def.~\ref{def:approrobustness} has a tight association with label change rate, i.e., if $x$ is $\eta$-robust, then the label change rate should be larger than, or close to $1-\eta$. 
Hereafter, we set $\mu$ to be the uniform distribution on $\bar  B_\infty(x,r)$.

It is natural to adapt spurious region guided refinement to quantitative robustness verification. 
In Alg.~\ref{Alg:main}, we do not return UNKNOWN when we cannot rule out a spurious region, but record the volume of the box $X$ as an over-approximation of the Lebesgue measure of the spurious region.
After we work on all the spurious regions, we calculate the sum of these volume, and obtain a sound robustness confidence. Here we do not calculate the volume of the spurious region because precise calculation of volume of a high-dimensional polytope remains open, and we do not choose to use randomized algorithms because it may not be sound.

We further improve the algorithm through the powerset technique~\cite{AI2}.
Powerset technique is a classical and effective way to enhance the precision of abstract interpretation. 
Basically we split the input region into several subsets, and run abstract interpretation on these subsets, 
In our quantitative robustness verification setting, powerset technique not only improves the precision, but also accelerates the algorithm in some situations:  
If the subsets have the same volume, and the percentage of the subsets on which we may fail to verify robustness is already smaller than $1-\eta$, then we have successfully verified the $\eta$-robustness property.





\section{Experimental Evaluation}\label{sec:eval}

We implement our approach as a prototype called DeepSRGR. The implementation is based on a re-implementation of  the ReLU and the affine abstract transformers of DeepPoly in Python 3.7 and we amend it accordingly to implement Alg.~\ref{Alg:main}. 
We use CVXPY~\cite{diamond2016cvxpy} as our modeling language for convex optimization problems and CBC~\cite{CBC} as the LP solver. 
It is worth mentioning that we ignore the floating point error in our re-implementation of DeepPoly because sound linear programming currently does not scale in our experiments.
In the terminating condition, we set $N=5$. 
All the experiments adopt Optimization~1 and Optimization~3 in Sect.~\ref{subsec:optimizations}. All the experiments are conducted on a CentOS 7.7 server with 16 Intel Xeon Plwatinum 8153 @2.00GHz (16 cores) and 512G RAM, and they use 96 sub-processes concurrently at most. Readers can find all the source code and other experimental materials in \url{https://github.com/CAS-LRJ/RefineRobustness}.

\paragraph{Datasets.}
We use MNIST~\cite{L1998Gradient} and ACAS Xu ~\cite{acasxu14,acasxu15}
as the datasets in our experiments. 
MNIST contains $60\,000$ grayscale handwritten digits of the size $28 \times 28$. We can train DNNs to classify the images by the written digits on them.
The ACAS Xu system is aimed to avoid airborne collisions for unmanned aircrafts and it uses an observation table to make decisions for the aircraft. In \cite{DBLP:journals/corr/abs-1810-04240}, the observation table is realized by training DNNs instead of storing it.

\paragraph{Networks.} On MNIST, we trained seven fully connected networks of the size $6 \times 20$, $3 \times 50$, $3 \times 100$, $6 \times 100$, $6 \times 200$, $9 \times 200$, and $6 \times 500$, where $m \times n$ refers $m$ hidden layers and $n$ neurons in each hidden layer, and we name them from FNN2 to FNN8, respectively (we also have a small network FNN1 for testing). On ACAS Xu, we randomly choose three networks used in~\cite{reluplex}, all of the size $5 \times 50$.

\subsection{Improvement in precision}
First we compare DeepPoly and DeepSRGR in terms of their precision of robustness verification. We consider the following two indices: (i) the maximum radius that the two tools can verify, and (ii) the number of uncertain $\mathrm{ReLU}$ neurons whose behaviors can be further determined by DeepSRGR. We randomly choose three images from the MNIST dataset, and calculate their maximum radius that the two tools can verify through a binary search on the seven FNNs we trained. We also record  the number of the uncertain $\mathrm{ReLU}$ neurons whose behaviors are renewed to definitely activated/deactivated on the maximum radius of DeepSRGR. We do not adopt Optimization 2 in Sect.~\ref{subsec:optimizations} in this experiment because Optimization 2 cannot be used in quantitative robustness verification, and we suppose that the evaluation of precision in this experiment holds for both verification tasks.

Table~\ref{tab:3picboundrate} shows the results. We can see from the table that DeepSRGR can verify stronger (i.e., larger maximum radius) robustness properties than DeepPoly, and determine behaviors of a large proportion of uncertain $\mathrm{ReLU}$ neurons even on these most challenging properties. The average number of iterations for ruling out a spurious region is around or below $3$ in all the running examples, and more than half of the spurious regions can be ruled out within $2$ iterations.

\begin{table}[t]
    \centering
    \scalebox{0.9}{
            \begin{tabular}{c|cc|c|cc|cc|cc}
            \toprule
            &
            \multicolumn{2}{|c|}{Maximum radius}&
            \multirow{2}{*}{\shortstack{\# spurious \\ regions}}&
            \multicolumn{2}{|c|}{\# uncertain $\mathrm{ReLU}$}&
            \multicolumn{2}{|c|}{\% renewed}&
            \multicolumn{2}{|c}{\# iterations}
            \\
            & DeepPoly & DeepSRGR & & Original & Renewed & {\enspace\enspace}MAX{\enspace\enspace} & {\enspace\enspace}AVG{\enspace\enspace} & MAX & GT \\ 
            \midrule
              & 0.034 & 0.047 & 6 &  51 &    38 & 74.5\% &    48.4\% &5 &17 \\
            FNN2  & 0.017 & 0.023 & 3 &  47 &    37 & 78.7\% &     51.8\% &4 &9 \\
             & 0.017 & 0.023 &1 &  34 &    25 & 73.5\% & 73.5\%  &4 &4  \\
             \midrule
            & 0.049 & 0.066 &6 & 88  &   69 & 78.4\% &  60.9\% &5 &15  \\
            FNN3  & 0.025 & 0.033 &7 & 94 &    85 & 90.4\% &    46.0\% &5 &18 \\
              & 0.045 & 0.058 &3 &  98 &    45 & 45.1\% &   27.2\% &5 &9 \\
              \midrule
             & 0.045 & 0.060 &6 &  180 &    102 & 56.7\% &   35.2\% & 5 & 19 \\
            FNN4  & 0.024 & 0.030 &6 &  199 &   144 & 72.4\% &   36.5\% &4 &15  \\
              &0.035         & 0.046   &   2  &   155  &   103       & 66.5\%       &   42.9\% &5 & 7 \\
              \midrule
              &0.034         & 0.042   &  7   &   305  &  245       & 80.3\%       &    37.8\% & 5 & 20  \\
             FNN5  &0.016         & 0.019     & 5   &   315  &     204     & 64.8\%       &    34.0\% & 4 & 14  \\
              &0.021        & 0.027    & 7    &   337  &   256       & 76.0\%       &   34.9\%  &5 &18 \\  
              \midrule
              &0.022 &0.026 &7 &683 &271 &39.7\% &19.8\% &4 &18\\
              FNN6 &0.011 &0.013 &6 &657 &483 &73.5\% &36.7\% &3 &14\\
              &0.021 &0.025 &8 &723 &169 &23.4\% &12.2\% &5 &21\\
                \midrule
              &0.021 &0.023 &9 &987 &297 &30.1\% &10.0\% &5 &29\\
              FNN7 &0.010 &0.011 &5 &877 &648 &73.9\% &26.8\% &3 &11\\
              &0.017 &0.019 &7 &913 &352 &38.6\% &24.3\% &3 &16\\
                \midrule
              &0.037 &0.044 &9 &1\,504 &976 &64.9\% &45.9\% &5 &36\\
              FNN8 &0.020 &0.022 &9 &1\,213 &818 &67.4\% &33.3\% &3 &21\\
              &0.033 &0.040 &9 &1\,371 &1\,269 &92.6\% &51.1\% &5 &37\\
            \bottomrule
            \end{tabular}
    }
    \caption{Maximum radius which can be verified by DeepPoly and DeepSRGR, and details of DeepSRGR running on its maximum radius, where in the number of renewed uncertain nuerons, we show the largest one among the spurious regions.}
    \label{tab:3picboundrate}
\end{table}

\subsection{Robustness verification performance}
We further evaluate our tool DeepSRGR by verifying more challenging robustness properties. We randomly choose $50$ samples from the MNIST dataset. On FNN4, FNN5, FNN6, and FNN7, we fix four radii, $0.037$, $0.026$, $0.021$, and $0.015$, for the four networks respectively, and verify the robustness property with the corresponding  radius on the $50$ inputs. The radius chosen here is very challenging for the corresponding  network. We adopt Optimization 2 in Sect.~\ref{subsec:optimizations} in this experiment.

Table~\ref{tab:batchverification} presents the results. DeepSRGR works significantly better than DeepPoly in verifying these  properties. Linear programming in DeepSRGR takes a large amount of time in the experiment, and thus DeepSRGR is less efficient.

\begin{table}[t]
    \centering
    \scalebox{1}{
            \begin{tabular}{ccccccc}
            \toprule
\multirow{2}{*}{Model} & \multirow{2}{*}{{\enspace\enspace\enspace\enspace}Size{\enspace\enspace\enspace\enspace}} & \multirow{2}{*}{{\enspace}Radius{\enspace}} & \multicolumn{2}{c}{\# verified} & \multicolumn{2}{c}{Time (s)} \\
                       &                       &                         & {\enspace}DeepPoly{\enspace}       & {\enspace}DeepSRGR{\enspace}       & {\enspace} MAX{\enspace}            & {\enspace} AVG{\enspace}           \\
                       \midrule
FNN4      &   $3 \times 100$       &    0.037      &   14    &  35  & 3\,384         &  781      \\
FNN5      &    $6 \times 100$      &    0.026      &   19    &  31  &  7\,508       &  1\,689      \\
FNN6      &     $6 \times 200$     &    0.021      &   14    &  25  &  23\,157       &   6\,178     \\
FNN7      &    $9 \times 200$      &     0.015     &    25   &  36  &  61\,760       &    8\,960    \\
                       \bottomrule
\end{tabular}
    }
    \caption{The number that DeepPoly and DeepSRGR verifies among the $50$ inputs, and the maximum/average running time of DeepSRGR.}
    \label{tab:batchverification}
\end{table}

Furthermore, we again run the $15$ running examples which is not verified by DeepSRGR on FNN4. This time we change the maximum number of iterations to $20$ and $50$, and obtain the following interesting observations:
\begin{itemize}
    \item Two more properties (out of $15$) are successfully verified when we change $N$ to $20$. No more properties can be verified even if we change $N$ from $20$ to $50$.
    \item In this experiments, $13$ more spurious regions are ruled out, six of which takes $6$ iterations, one takes $7$, two takes $8$, and the other four takes $13$, $22$, $27$, and $32$ iterations. In these running examples, the average number of renewed $\mathrm{ReLU}$ behaviors is $102.8$, and a large proportion are renewed in the last iteration ($47.4\%$ on average). Fig.~\ref{fig:exprenew} shows the detailed results.
    \item As for the $13$ spurious regions which cannot be ruled out within $50$ iterations, the average number of renewed $\mathrm{ReLU}$ behaviors is only $8.54$, which is significantly lower than the average of the $13$ spurious regions which are newly ruled out. In these running examples, changes in $\mathrm{ReLU}$ behaviors and  $\mathrm{ReLU}$ abstraction modes do not happen after the 9th iteration, and the average number is $4.4$. 
\end{itemize}

\begin{figure}[t]
\centering
\includegraphics[width=0.7\linewidth]{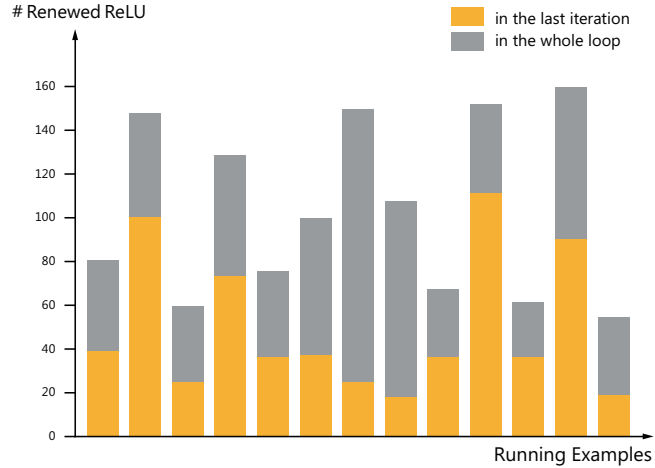} 
\caption{Number of renewed $\mathrm{ReLU}$ behaviors in the spurious regions newly ruled out.} \label{fig:exprenew}
\end{figure}

We observe that, by increasing  the termination threshold $N$ from $5$ to $50$, only two more properties out of $15$ can be verified additionally. This suggests that our method can effectively identify these spurious regions which are relevant to verification of the property, in a small number of iterations.

\subsection{Quantitative robustness verification on ACAS Xu networks}

\begin{figure}[t]
\centering
\includegraphics[width=0.7\linewidth]{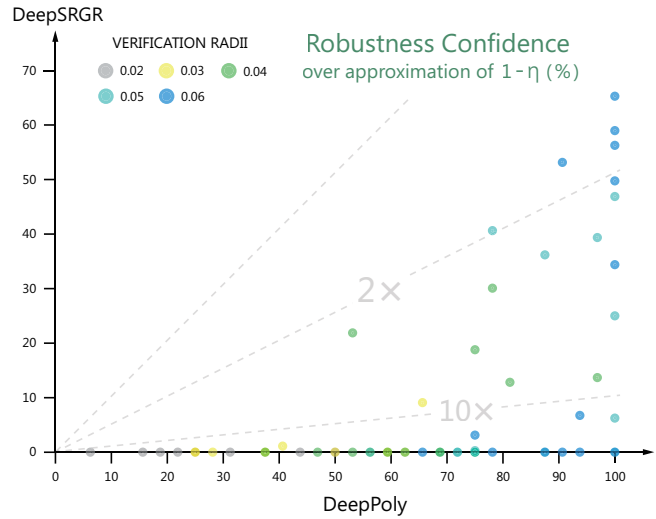} 
\caption{Quantitative robustness verification using DeepPoly and DeepSRGR} \label{fig:expquantitative}
\end{figure}

We evaluate DeepSRGR for quantitative robustness verification on ACAS Xu networks. 
We randomly choose five inputs, and compute the maximum robustness radius for each input on the three networks with DeepPoly through a binary search. 
In our experiment, the radius for a running example is the maximum robustness radius plus $0.02$, $0.03$, $0.04$, $0.05$, and $0.06$. 
We use the powerset technique and the number of splits is $32$.
For DeepPoly, the robustness confidence it gives is the proportion of the splits on which DeepPoly verifies the property.

Fig.~\ref{fig:expquantitative} shows the results. We can see that DeepSRGR gives significantly better over-approximation of $1-\eta$ than DeepPoly. That is, in more than $90\%$ running examples, our over-approximation is no more than one half of that given by DeepPoly, and in more than $75\%$, our over-approximation is even smaller than one tenth of that given by DeepPoly.

\section{Related Works and Conclusion}\label{sec:conclusion}
We have already discussed papers mostly related to our paper. Here we add some more new results.  Marabou~\cite{marabou} has been developed as the next generation of Reluplex. Recently, some verification approaches based on  abstractions of DNN models have been proposed in \cite{abstractioncav20,atva2020}. 
In addition, alternative approaches based on constraint-solving~\cite{LM2017,DBLP:conf/aaai/NarodytskaKRSW18,DBLP:journals/jmlr/BunelLTTKK20,DBLP:conf/cvpr/LinYCZLLH19}, layer-by-layer exhaustive search~\cite{DBLP:conf/cav/HuangKWW17}, global optimization~\cite{RHK2018,DBLP:conf/nfm/DuttaJST18,DBLP:conf/ijcai/RuanWSHKK19}, functional approximation~\cite{fastlin}, reduction to two-player games~\cite{DBLP:conf/tacas/WickerHK18,DBLP:journals/tcs/WuWRHK20}, and star set based abstraction~\cite{starset19,imagestar20} have been proposed as well.

In this work, we propose a spurious region guided refinement approach for robustness and quantitative robustness verification of deep neural networks, where abstract interpretation calculates an abstraction, and linear programming performs refinement with the guidance of the spurious region.
Our experimental results show that our tool can significantly improve the precision of DeepPoly, verify more robustness properties, and often provide a quantitative robustness with strict soundness guarantee.  

Abstraction interpretation based framework is quite extensive to different DNN models, different properties, and incorporate different verification methods. As future work, we will investigate how to increase the precision further by using more precise linear over-approximation like~\cite{krelu}.

\bibliographystyle{splncs04}
\bibliography{bib}

\newpage
\appendix
\section{Proof of Lemma~\ref{lemma:CPO}}

\begin{proof}
We prove the lemma by induction on the dimension $n$. The case for $n=1$ is trivial. 
Now we assume that it holds for $n-1$. 
For an abstract element $a=(a^\le,a^\ge) \in \mathcal A_n$, we can always write it as $a=(a_{1..n-1},a_n)$ where $a_{1..n-1}\in \mathcal{A}_{n-1}$ is the abstract element of the first $n-1$ dimensions, and $a_n=(a_n^\le,a_n^\ge)$. 
Because $\{a^{(k)}\}$ is decreasing, $\{a_{1..n-1}^{(k)}\}$ is also decreasing, and from the induction hypothesis, there exists $a_{1..n-1}^* \in \mathcal{A}_{n-1}$, s.t. $\gamma(a_{1..n-1}^*)=\bigcap_{k=1}^\infty \gamma(a_{1..n-1}^{(k)})$. 
It is easy to see that $(a_{1..n-1}^*,a_n^{(k)})$ is also decreasing and bounded.
For $a_n^{(k)}=(a_n^{(k),\le},a_n^{(k),\ge})$, we write $a_n^{(k),\le}=\sum_{i=1}^{n-1} w_i^{(k)} x_i +b^{(k)}$, and $a_n^{(k),\le}$ is bounded and increasing on $\gamma(a_{1..n-1}^*)$. Because $\{w_i^{(k)}\}$ and $\{b^{(k)}\}$ are bounded, by Bolzano-Weierstrass Theorem, there exists a subsequence, still denoted by $\{w_i^{(k)}\}$, such that these $\{w_i^{(k)}\}$  converge to some $w_i^{*} \in \mathbb R$ and $b^{(k)} \to b^* \in \mathbb R $ as $k \to \infty$.
We set 
$$
a_n^{*,\le}=\sum_{i=1}^{n-1}  w_i^{*} x_i +b^*.
$$
Then $a_n^{(k),\le}$ converges increasingly to $a_n^{*,\le}$ on $\gamma(a_{1..n-1}^*)$ as $k \to \infty$. For $a_n^{(k),\ge}$, we follow a similar procedure as above to obtain $a_n^{*,\ge}$. 
Now we claim that $a^*:=(a_{1..n-1}^*,(a_n^{*,\le},a_n^{*,\ge}))$ satisfies $\gamma(a^*)=\bigcap_{k=1}^\infty \gamma(a^{(k)})$. First we prove that $\gamma(a^*)$ is the limit of the subsequence.
\begin{itemize}
    \item For any $x \in \gamma(a^*)$ and $k$, from the construction of $a^*$, we have $x_{1..n-1} \in \gamma(a_{1..n}^*) \subseteq \gamma(a_{1..n}^{k})$, and
    $$
    x_n \ge a_n^{*,\le}(x_{1..n-1})=\sum_{i=1} ^{n-1}  w_i^{*} x_i +b^* \ge \sum_{i=1}^{n-1} w_i^{(k)} x_i +b^{(k)}=a_n^{(k),\le}.
    $$
    Similarly we have $x_n \le a_n^{(k),\ge}$, and we obtain $x \in \gamma(a^{(k)})$. Immediately we have $x \in \bigcap_k \gamma(a^{(k)})$ since $k$ is arbitrary.
    \item For any $x \in \bigcap_k \gamma(a^{(k)})$, we have $x_{1..n-1} \in \bigcap_k \gamma(a_{1..n-1}^{(k)})=a_{1..n-1}^*$, and
    $$
    x_n \ge a_n^{(k),\le} = \sum_{i=1}^{n-1} w_i^{(k)} x_i +b^{(k)}.
    $$
    By letting $k \to \infty$, we have $x_n \ge a_n^{*,\le}$. Similarly we have $x_n \le a_n^{*,\ge}$, so $x \in \gamma(a^*)$.
\end{itemize}
Thus we have $\gamma(a^*)=\bigcap_k \gamma(a^{(k)})$ for the subsequence. For the original sequence, its limit exists, so it must be equal to the limit of its subsequence, i.e. $\gamma(a^*)=\bigcap_{k=1}^\infty \gamma(a^{(k)})$. We complete the proof. \hfill \qed
\end{proof}

\end{document}